\documentclass[runningheads]{llncs}
\usepackage{graphicx}
\usepackage{amsmath,amssymb} 
\usepackage{color}
\usepackage[width=122mm,left=12mm,paperwidth=146mm,height=193mm,top=12mm,paperheight=217mm]{geometry}
\begin{document}
\pagestyle{headings}
\mainmatter
\title{Order-Consistent Latent Geometry for Reliable Short-Horizon Planning in World Models}
\titlerunning{Self-Approaching Latent Geometry for Order-Consistent Planning}

\authorrunning{J. Wu et al.}

\author{Junlin Wu \and Wen Fan \and Dandan Zhang}
\institute{Imperial College London, London, United Kingdom}

\maketitle
\begin{abstract}

Latent planners often rank candidate action sequences by terminal Euclidean
distance to a goal latent. Exact agreement with transition distance would make
this score ideal, but is stronger than planning requires: along a
transition-shortest rollout, the essential requirement is that latent distance
does not reverse true transition progress.

We show that this pathwise no-reversal requirement is precisely the classical
self-approaching condition. To the best of our knowledge, we are the first to
identify self-approachingness as a planner-facing compatibility condition for
terminal Euclidean latent planning and to connect its violation to an exact
finite-horizon candidate-order reversal.

Because visual world models predict discrete latent sequences, we analyze their
polygonal interpolation. We prove that an acute bend creates a nonempty region
in which terminal Euclidean distance ranks a transition-worse candidate ahead
of one that makes genuine progress, even under exact rollouts. We further
interpret latent dimension as angular capacity through a classical
spherical-code bound. A controlled six-branch experiment, including a
zero-padded three-dimensional baseline, validates the predicted ordering
failure and separates angular arrangement from ambient dimensionality.

\end{abstract}

\section{Introduction}
\label{sec:intro}

Latent world models have emerged as a promising framework for goal-directed
control. Rather than planning in observation space, they predict the latent
consequences of candidate action sequences and evaluate those candidates in a
learned representation. This paradigm spans latent control, differentiable
planning, and online model predictive control~\cite{watter2015e2c,srinivas2018upn,hafner2019planet,hansen2022tdmpc}.
Goal-conditioned and reconstruction-free systems similarly optimize toward a
latent image goal without decoding future pixels~\cite{kurutach2018causal,zhou2025dinowm,assran2025vjepa2,maes2026lewm}.

Reliable planning therefore depends not only on predictive accuracy, but also
on whether latent geometry induces the right ordering over candidate
trajectories. A world model may predict every rollout exactly and nevertheless
select an action that moves in the wrong direction when terminal Euclidean
distance ranks a transition-inferior candidate above one that makes genuine
progress. Thus perfect prediction can still be paired with a misleading
decision objective.

Short receding-horizon optimization is common, and several methods append a
learned terminal value to summarize beyond-horizon consequences~\cite{hoeller2020deepvalue,hansen2022tdmpc}.
We instead ask what raw terminal Euclidean goal matching can guarantee. Exact
agreement with transition distance is unnecessarily restrictive: on one
shortest rollout it forces a straight latent segment, while real transition
structures contain bends, obstacles, merges, and branches. Planning needs only
the weaker ordinal requirement that moving forward along a shortest rollout
must not make the same future goal appear farther away. We call this
\emph{pathwise transition-order consistency} and show that it is exactly
self-approachingness.

\paragraph{Relation to prior work.}
Planning-aware representations use structured transition models, learned
action distances, curvature regularization, and reachability-aware objectives
or metrics~\cite{kurutach2018causal,steccanella2025mad,wang2026straightening,li2026rcaux,li2026beyond}.
These works establish that raw latent proximity can be insufficient; we do not
claim that general observation. Classical greedy and self-approaching drawings
study distance-monotone geometric routing~\cite{papadimitriou2004greedy,icking1999self,alamdari2013self},
but not learned world models, horizon-truncated terminal objectives, or the
exact candidate-order reversal derived here.

\paragraph{Contributions.}
We make four principal contributions:
\begin{itemize}
\setlength{\itemsep}{0.5pt}
\setlength{\parsep}{0pt}
\setlength{\parskip}{0pt}
\setlength{\topsep}{1pt}
    \item We separate exact pathwise metric fidelity from the weaker ordering
    property needed by terminal-distance guidance, and prove that exact
    fidelity forces every shortest latent rollout to be straight.
    \item We identify self-approachingness as this ordering property and derive
    an equivalent differential forward-half-space condition.
    \item For locally straight mandatory junctions, we prove a sharp
    necessary-and-sufficient non-acute criterion and the resulting
    finite-horizon wrong-way optimum.
    \item We prove the angular-capacity bound $n\leq2D$ and validate the
    mechanism in a six-branch stress test with a padded-dimension control.
\end{itemize}

\section{Theory}
\label{sec:theory}

\subsection{Pathwise Transition-Order Consistency}
\label{sec:order}

Let $(\mathcal{X},d_{\mathrm{T}})$ be a transition metric space,
$\phi:\mathcal{X}\rightarrow\mathbb{R}^D$ a latent embedding, and
$\gamma:[0,L]\rightarrow\mathcal{X}$ a transition-shortest rollout
parameterized by transition length. We write
$\eta=\phi\circ\gamma$ for its continuous latent image.

Requiring latent Euclidean distance to reproduce transition distance exactly is
unnecessarily restrictive. In particular, the equality
$|\eta(t)-\eta(s)|_2=t-s$ for every $s<t$ forces $\eta$ to be a unit-speed
straight segment by the equality condition of the Euclidean triangle
inequality. Planning only requires a weaker ordinal property. Since $\gamma$ is
transition-shortest, moving from $\gamma(s)$ to $\gamma(t)$ for $s<t<r$
strictly decreases the true remaining transition distance to the future state
$\gamma(r)$. Latent distance should not reverse this progress.

\begin{definition}[Pathwise transition-order consistency]
\label{def:order}
A transition-shortest latent rollout $\eta$ is
\emph{pathwise transition-order consistent} if, for all
$0\leq s<t<r\leq L$,
\begin{equation}
|\eta(t)-\eta(r)|_2
\leq
|\eta(s)-\eta(r)|_2.
\label{eq:order-consistency}
\end{equation}
\end{definition}

This condition is precisely self-approachingness. It preserves the ordering of
true transition progress along a shortest rollout without requiring latent
Euclidean distance to reproduce transition distance numerically. For a
differentiable rollout, the same condition can be written as
$\dot{\eta}(t)^\top(\eta(r)-\eta(t))\geq0$ for every $r>t$. Thus nonzero
curvature is not itself a failure; a reversal occurs only when a future point
falls behind the current tangent half-space.

\subsection{Polygonal Rollouts and Acute-Bend Reversal}
\label{sec:reversal}

A visual world model produces a discrete latent rollout
$P=(z_0,z_1,\ldots,z_H)$. We analyze its polygonal interpolation, obtained by
joining consecutive predicted states with straight segments. This construction
does not assume that the environment or its dynamics are globally linear. It is
the minimal continuous geometric representation of a discrete model rollout.
Analyzing the interpolation rather than only its sampled vertices also makes
the condition independent of temporal resolution: an ordering reversal between
two predictions should not disappear merely because the rollout is sampled
more coarsely.

Consider a local two-segment portion of a transition-shortest rollout,
$z_\star+r u\rightarrow z_\star\rightarrow z_\star+b v$, where $r,b>0$
are transition distances from the turning state and
$u,v\in\mathbb{S}^{D-1}$ are the two outward directions. The correct rollout
moves inward along $-u$ and then outward along $v$, with true remaining
transition distance $r+b$.

\begin{theorem}[Sharp non-acute criterion]
\label{thm:criterion}
The two-segment shortest rollout is pathwise transition-order consistent for
every source and goal radius if and only if
\begin{equation}
u^\top v\leq0.
\label{eq:nonacute}
\end{equation}
Thus right and obtuse outward angles preserve pathwise ordering, whereas an
acute outward angle produces a strict reversal.
\end{theorem}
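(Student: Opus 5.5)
We reduce Definition~\ref{def:order} for the polygonal path $\eta$ to a finite set of inner-product conditions by using the differential form stated after the definition: along each straight segment, the map $t\mapsto|\eta(t)-\eta(r)|_2^2$ is a convex quadratic. So order consistency is equivalent to requiring $\dot\eta(t)^\top(\eta(r)-\eta(t))\ge 0$ for every $t<r$, with one-sided derivatives at the vertex $z_\star$. Parameterize the path as $\eta(t)=z_\star+(r-t)u$ for $t\in[0,r]$ and $\eta(t)=z_\star+(t-r)v$ for $t\in[r,r+b]$. On the incoming segment the velocity is $-u$; on the outgoing segment it is $v$. We treat the source and goal radii $r,b$ as arbitrary positive numbers and prove both directions.

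\textbf{Sufficiency ($u^\top v\le 0$).} We split by where the two points lie.
\begin{itemize}
\item If both $t$ and $r'$ lie on the same segment, the path is straight there, so distance to a later point decreases monotonically.
\item If the current point $\eta(t)=z_\star+\alpha u$ with $\alpha\ge 0$ is on the incoming leg and the future point $\eta(r')=z_\star+\beta v$ with $\beta>0$ is on the outgoing leg, then
\[
\dot\eta^\top(\eta(r')-\eta(t))=-u^\top(\beta v-\alpha u)=\alpha-\beta\,u^\top v\ge 0 .
\]
\end{itemize}
Hence the derivative condition holds everywhere. Integrating it gives monotone decrease of $|\eta(\cdot)-\eta(r')|_2$ on $[0,r']$, which is exactly \eqref{eq:order-consistency}. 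Since the integrand changes formula only at $z_\star$, we justify the integration by noting that the squared distance is piecewise $C^1$ and continuous.

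\textbf{Necessity ($u^\top v>0$).} We exhibit an explicit reversal. Take $s$ with $\eta(s)=z_\star+\epsilon u$, the vertex $\eta(t)=z_\star$, and the goal point $\eta(r')=z_\star+\beta v$. Then
\[
|z_\star-\eta(r')|^2=\beta^2,\qquad |\eta(s)-\eta(r')|^2=\epsilon^2+\beta^2-2\epsilon\beta\,u^\top v .
\]
The second quantity is strictly smaller than the first whenever $0<\epsilon<2\beta\,u^\top v$. Choosing $\epsilon\le r$ and $\beta\le b$ makes this realizable for every positive source and goal radius, since we may take $\beta=b$ and $\epsilon=\min(r,\,\beta u^\top v)$. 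So acute angles violate consistency strictly, which also gives the strict-reversal clause. In terms of candidate ranking, the point before the bend is ranked strictly closer to the goal than the bend itself, even though it is transition-worse.

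The main obstacle is bookkeeping rather than depth. We must correctly interpret ``for every source and goal radius'', both in the quantifier order of the iff and in the degenerate choices $\epsilon\to 0$. We must also handle the nondifferentiable vertex, which we resolve through one-sided derivatives and the piecewise-quadratic argument.
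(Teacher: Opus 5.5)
Your proof is correct and follows essentially the paper's route: both split off the same-segment case and reduce to the cross-segment computation, where your tangent inner product $\alpha-\beta\,u^\top v$ is exactly half of the paper's $\partial F/\partial r = 2(r-b\,u^\top v)$ for $F=r^2+b^2-2rb\,u^\top v$. The differences are cosmetic: for the reversal you compare the pair $z_\star+\epsilon u$ and $z_\star$ instead of using the negative sign of the derivative on $0<r<b\,u^\top v$, and you spell out the piecewise-$C^1$ monotonicity step that turns the derivative condition into the three-point inequality, which the paper leaves implicit.
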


\begin{proof}
Pairs of points on the same straight segment are automatically
order-consistent. For a current point $z_\star+r u$ on the first segment and a
future point $z_\star+b v$ on the second, their squared latent distance is
$F(r,b)=r^2+b^2-2rb,u^\top v$. Correct progress toward $z_\star$ decreases
$r$, while $\partial F/\partial r=2(r-bu^\top v)$. This derivative is
nonnegative for every $r,b>0$ exactly when $u^\top v\leq0$. If
$u^\top v>0$, every $0<r<bu^\top v$ gives a strict ordering reversal.
\end{proof}

The same geometry can reverse the ordering of feasible finite-horizon
candidates. Assume the incoming transition is locally reversible. From
$z_\star+r u$, compare an inward candidate
$z_{\mathrm{in}}=z_\star+(r-h)u$ with a backward candidate
$z_{\mathrm{back}}=z_\star+(r+h)u$, where $0<h<r$. The inward candidate has
strictly smaller remaining transition distance to the goal
$z_\star+bv$.

\begin{proposition}[Exact candidate-order reversal]
\label{prop:pair-reversal}
Let $c=u^\top v$. If $c>0$ and $r<bc$, terminal Euclidean scoring strictly
ranks the transition-worse backward candidate ahead of the
transition-better inward candidate:
\begin{equation}
|z_{\mathrm{back}}-(z_\star+bv)|*2^2
-
|z*{\mathrm{in}}-(z_\star+bv)|_2^2
=
4h(r-bc)
<0.
\label{eq:pair-reversal}
\end{equation}
\end{proposition}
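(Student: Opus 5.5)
The identity follows by expanding both squared terminal distances with the same law-of-cosines computation used in the proof of Theorem~\ref{thm:criterion}. Write each candidate as $z_\star+\rho u$, with $\rho=r+h$ for $z_{\mathrm{back}}$ and $\rho=r-h$ for $z_{\mathrm{in}}$; both radii are positive because $0<h<r$. The goal is $z_\star+bv$. Since $|u|_2=|v|_2=1$, the difference vector is $\rho u-bv$, and its squared norm is
\begin{equation*}
F(\rho,b)=\rho^2+b^2-2\rho b\,c,\qquad c=u^\top v,
\end{equation*}
which is the same function $F$ that appears in the proof of Theorem~\ref{thm:criterion}.

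Next I subtract the two values. The terms $b^2$ cancel. The quadratic terms give $(r+h)^2-(r-h)^2=4rh$, and the cross terms give $-2bc\big((r+h)-(r-h)\big)=-4bch$. So the difference is $4rh-4bch=4h(r-bc)$. The identity holds for every $c$, with no sign assumption. Under the hypotheses $c>0$ and $r<bc$ together with $h>0$, the product $4h(r-bc)$ is strictly negative. This means the backward candidate receives the strictly smaller terminal Euclidean score and is ranked ahead.

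It remains to confirm the phrase ``transition-worse'', which I take from the setup. By local reversibility, the backward candidate sits on the incoming segment at transition distance $r+h$ from the turning state. Its remaining transition distance along the shortest rollout is therefore $r+h+b$, compared with $r-h+b$ for the inward candidate. Because $h>0$, the inward candidate is strictly better.

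Nothing in this argument is a real obstacle; the algebra is elementary. The only point that needs care is the quantifier condition: $r<bc$ must be strictly compatible with $c>0$, and the strictness comes from $h>0$. As a consistency check, $\partial F/\partial\rho=2(\rho-bc)$ is negative on $(r-h,r+h)$ whenever $r+h\le bc$. This matches the monotonicity reasoning in Theorem~\ref{thm:criterion}. The exact identity is stronger than that check, however: it needs only $r<bc$, because $F$ is quadratic and symmetric about $\rho=bc$.
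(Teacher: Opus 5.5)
Your proof is correct and is the same direct expansion the paper relies on: write both candidates in the form $F(\rho,b)=\rho^2+b^2-2\rho b\,c$, subtract to get $4h(r-bc)$, and conclude strict negativity from $h>0$ and $r<bc$. Your check that the transition ordering $r+h+b$ versus $r-h+b$ uses local reversibility, with $0<h<r$, matches the setup the paper assumes.
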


This result does not require the two candidates to be the planner's only
options. It establishes a strict pairwise inversion: a candidate with larger
true remaining transition distance receives a better terminal Euclidean score.
Therefore, the score fails to preserve the planning-relevant candidate
ordering even if a third candidate is ultimately selected from the full
candidate set.

In the branch-local environment studied in Sec.~\ref{sec:experiment}, the
short-horizon reachable set remains on the same branch. In that special case,
the complete reachable interval is $q\in[r-H,r+H]$, and the Euclidean terminal
loss is minimized at the projection of $bc$ onto this interval. Consequently,
when $c>0$ and $r<bc$, the globally selected endpoint satisfies $q>r$ and the
planner moves away from the turning state. Proposition~\ref{prop:pair-reversal}
is more general: it proves a genuine ordering failure without assuming that
motion is restricted to two directions.

Both results assume exact candidate rollouts. The failure therefore arises from
terminal scoring geometry rather than accumulated prediction error or model
hallucination.

\subsection{Latent Dimension as Angular Capacity}
\label{sec:capacity}

The preceding result gives a planner-facing interpretation of latent dimension.
Suppose $n$ locally distinct transition branches meet at a common state, with
outward latent directions $u_1,\ldots,u_n\in\mathbb{S}^{D-1}$, and every pair
of branches may form a transition-shortest two-segment rollout. By
Theorem~\ref{thm:criterion}, preserving pathwise transition order for every
branch pair requires $u_i^\top u_j\leq0$ whenever $i\neq j$.

\begin{corollary}[Angular capacity]
\label{cor:capacity}
If $n$ transition branches must be pairwise non-acute in $\mathbb{R}^D$, then
\begin{equation}
n\leq2D,
\qquad\text{or equivalently}\qquad
D\geq\left\lceil\frac{n}{2}\right\rceil.
\label{eq:capacity}
\end{equation}
The bound is tight and is achieved by subsets of
${\pm e_1,\ldots,\pm e_D}$.
\end{corollary}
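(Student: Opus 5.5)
We prove the classical fact that at most $2D$ unit vectors in $\mathbb{R}^D$ can have pairwise nonpositive inner products, and then show that the bound is attained. By Theorem~\ref{thm:criterion}, the hypothesis is exactly that $u_1,\ldots,u_n\in\mathbb{S}^{D-1}$ satisfy $u_i^\top u_j\leq 0$ for $i\neq j$. So the corollary reduces to this purely geometric statement. We argue by induction on $D$. The case $D=1$ is immediate: the only unit vectors are $\pm1$, and two copies of the same sign have inner product $1>0$, so $n\leq 2$.

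For the inductive step, fix $u_n$ and write each other vector as $u_i=\alpha_i u_n+w_i$ with $w_i\perp u_n$. Pairwise non-acuteness with $u_n$ gives $\alpha_i\leq 0$. We split the indices $i<n$ into two groups.

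\emph{Case $w_i=0$.} Then $u_i=-u_n$, and at most one such index exists, since $(-u_n)^\top(-u_n)=1>0$.

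\emph{Case $w_i\neq 0$.} For two such indices $i\neq j$,
\[
w_i^\top w_j = u_i^\top u_j-\alpha_i\alpha_j \leq 0,
\]
because $u_i^\top u_j\leq 0$ and $\alpha_i\alpha_j\geq 0$. Hence the normalized vectors $w_i/|w_i|_2$ are pairwise non-acute unit vectors in the $(D-1)$-dimensional space $u_n^\perp$. Distinct indices may give equal normalized vectors, but then their inner product is $1>0$, which contradicts $w_i^\top w_j\leq 0$. So the normalized vectors are distinct and the induction hypothesis applies, bounding their number by $2(D-1)$.

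Adding the three contributions gives
\[
n\leq 1+1+2(D-1)=2D.
\]
The equivalent form $D\geq\lceil n/2\rceil$ follows because $D$ is an integer.

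For tightness, the $2D$ vectors $\pm e_1,\ldots,\pm e_D$ have pairwise inner products in $\{0,-1\}$, so they are pairwise non-acute. Any subset of them realizes every $n\leq 2D$. By Theorem~\ref{thm:criterion}, each pair of these directions then yields a pathwise transition-order consistent two-segment rollout.

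The main point needing care is the projection step. One must check that the sign condition $\alpha_i\leq 0$ makes the product $\alpha_i\alpha_j$ nonnegative, since that is what preserves non-acuteness after projecting onto $u_n^\perp$. One must also handle the degenerate indices where $w_i=0$, which is why the antipode $-u_n$ is counted separately. The rest of the argument is bookkeeping.
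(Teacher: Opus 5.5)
Your proof is correct, and it takes a genuinely different route from the paper. The paper's proof is a single citation: it treats the pairwise non-acute unit directions as a spherical code with minimum angle $\pi/2$, invokes Rankin's bound $A(D,\pi/2)=2D$, and uses the cross-polytope for tightness. You instead prove this case of Rankin's bound directly, by induction on $D$ with projection onto $u_n^\perp$. The identity $w_i^\top w_j=u_i^\top u_j-\alpha_i\alpha_j$, combined with $\alpha_i\le 0$, is exactly what keeps the projected family non-acute. You also isolate the antipode $-u_n$ correctly and count it once.

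Your route makes the corollary self-contained and shows where the factor $2$ comes from: each dimension absorbs one direction plus at most its antipode. It also gives the equality case almost for free. If $n=2D$, then $-u_n$ must be present, and being non-acute to both $\pm u_n$ forces $\alpha_i=0$ for every other $i$. The induction then shows that every maximal configuration is $\{\pm f_1,\ldots,\pm f_D\}$ for some orthonormal basis. So the cross-polytope used in the six-branch study is the unique maximizer up to an orthogonal map.

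The paper's citation is shorter and places the result among the general bounds $A(D,\theta)$. That family would be the natural tool if one later imposed a margin such as $u_i^\top u_j\le-\epsilon$, where the count becomes dimension-free, $n\le 1+1/\epsilon$.
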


\begin{proof}
The unit branch directions form a spherical code with minimum angle at least
$\pi/2$. Rankin's bound gives $A(D,\pi/2)=2D$~\cite{rankin1955caps}, and the
cross-polytope achieves equality.
\end{proof}

Consequently, latent dimension provides angular capacity: increasing $D$
allows more transition-distinct branches to remain geometrically separated
without introducing acute configurations that reverse pathwise progress
ordering.

\section{Controlled Six-Branch Study}
\label{sec:experiment}

We isolate the theorem on a six-branch star. A state is $(i,r)$ with branch
$i\in\{0,\ldots,5\}$ and radius $r\in[0,1]$; cross-branch transition distance
is $r+b$. Every embedding uses $\phi(i,r)=r u_i$. We compare (i) a 2D
regular hexagon, (ii) the same hexagon zero-padded into 3D, and (iii) the 3D
cross-polytope $\{\pm e_1,\pm e_2,\pm e_3\}$. The first two embeddings have
identical pairwise angles, six acute pairs, and acute loss
$\mathcal{L}_{\mathrm{acute}}=\sum_{i<k}[u_i^{\top}u_k]_{+}^{2}=1.5$; the
cross-polytope has no acute pair and zero acute loss.

\begin{figure}[t]
\centering
\includegraphics[width=\linewidth]{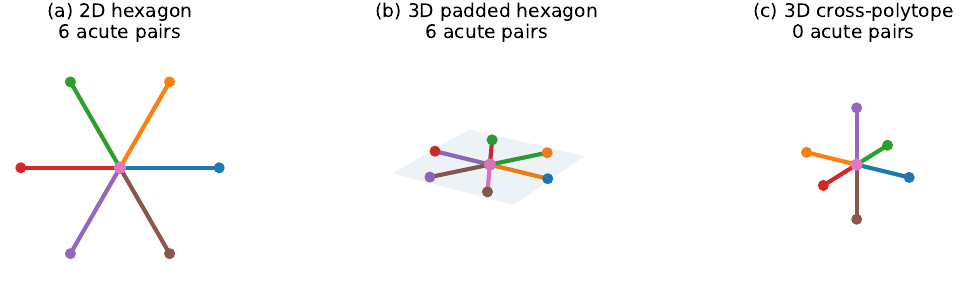}
\caption{
Geometry of the six-branch latent embeddings.
\textbf{(a)} A two-dimensional regular hexagon, in which adjacent branch
directions form acute angles.
\textbf{(b)} A three-dimensional zero-padded copy of the same hexagon, which
changes the ambient dimension but preserves all pairwise angles.
\textbf{(c)} A three-dimensional cross-polytope embedding with directions
$\{\pm e_1,\pm e_2,\pm e_3\}$, in which all branch pairs are non-acute.
The comparison separates angular arrangement from ambient dimension alone.
}
\label{fig:geometry}
\end{figure}
\paragraph{Protocol.}
We discretize each branch at $0.05$. Across all ordered branch pairs and
start/goal radii in $\{0.1,\ldots,1.0\}$ ($3{,}000$ queries per horizon), the
planner enumerates terminal states within budget $H$, minimizes the terminal
Euclidean score $\|\phi(y)-\phi(g)\|_2^2$, executes one graph step, and
replans. We report
strict outward-over-inward misranking and success within 100 steps. Ties are
excluded from strict misranking and use a fixed lexicographic rule in closed
loop. The released script reproduces all curves.

\begin{figure}[t]
\centering
\begin{minipage}[t]{0.49\linewidth}
\centering
\includegraphics[width=\linewidth]{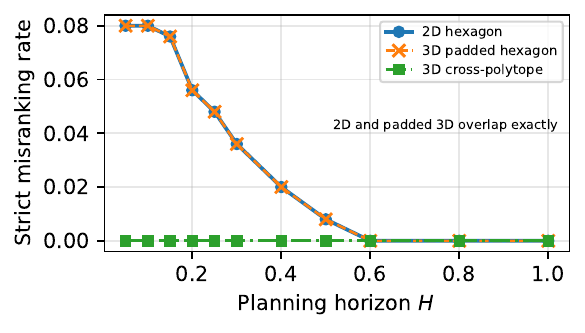}
\vspace{-1mm}
\small (a) Strict terminal-score misranking
\end{minipage}\hfill
\begin{minipage}[t]{0.49\linewidth}
\centering
\includegraphics[width=\linewidth]{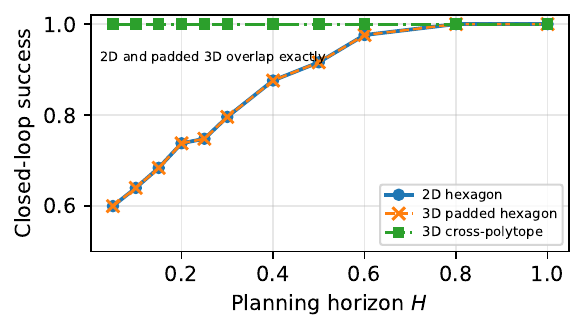}
\vspace{-1mm}
\small (b) Receding-horizon success
\end{minipage}
\caption{
\textbf{(a)} Strict terminal-score misranking across planning horizons.
\textbf{(b)} Receding-horizon planning success.
All three embedding conditions are plotted. The 2D and zero-padded 3D
hexagons overlap exactly because zero-padding preserves their pairwise angular
geometry. Only the reoriented 3D cross-polytope removes acute conflicts,
strict misrankings, and the associated planning failures.
}
\label{fig:results}
\end{figure}

\paragraph{Results.}
At $H=0.05$, the 2D and padded embeddings produce a strict misranking rate of
$8.0\%$ and closed-loop success of $60.0\%$; the cross-polytope has zero strict
misranking and $100\%$ success. At $H=0.30$, the acute embeddings improve to
$3.6\%$ misranking and $79.6\%$ success, while the cross-polytope remains at
zero and $100\%$. The acute error vanishes once the horizon is long enough to
see beyond the local trap. Most importantly, the 3D padded control is identical
to 2D at every horizon, whereas the 3D cross-polytope succeeds: the gain comes
from available angular arrangement, not coordinate count by itself.

\section{Scope and Conclusion}
\label{sec:conclusion}

Terminal-distance planning needs no global isometry; it needs Euclidean
scores not to reverse progress along the rollout prefixes on which the planner
acts. This condition is exactly self-approachingness. Acute mandatory junctions
create exact wrong-way optima, while multi-branch bottlenecks turn latent
width into finite junction-ordering capacity. Our guarantees are pathwise and
apply to graph-like point bottlenecks with locally reversible branches; they do
not ensure global ranking across unrelated paths or imply impossibility for
continuous-action tasks.

\bibliographystyle{splncs}
\bibliography{egbib}
\end{document}